\begin{document}
\mainmatter

\title{Adaptive Feature Interpolation for Low-Shot Image Generation} 


\titlerunning{Adaptive Feature Interpolation}
%

\author{Mengyu Dai\inst{1} \and
Haibin Hang\inst{2} \and
Xiaoyang Guo\inst{3}}
\authorrunning{Dai et al.}
%

\institute{Salesforce\and Amazon\and Meta 
\\
\email{\tt\small
mdai@salesforce.com,
haibinh@amazon.com,
xiaoyangg@fb.com
}}
\maketitle

\begin{abstract}
Training of generative models especially Generative Adversarial Networks can easily diverge in low-data setting. To mitigate this issue, we propose a novel implicit data augmentation approach which facilitates stable training and synthesize high-quality samples without need of label information. 
Specifically, we view the discriminator as a metric embedding of the real data manifold, which offers proper distances between real data points.
We then utilize information in the feature space to develop a fully unsupervised and data-driven augmentation method. 
Experiments on few-shot generation tasks show the proposed method significantly improve results from strong baselines with hundreds of training samples.
\end{abstract}

\section{Introduction}
Majority of learning algorithms today favor the feed of large training data. However, it is often difficult to collect sufficient amount of high-quality data for usage. 
In addition, 
intelligent systems like human brains do not need millions of samples to learn useful patterns and are energy-efficient. 
On the premise of it, learning with small data has been an important research area in various tasks \cite{Zhang_2021_CVPR,Hong_2021_CVPR,Li_2021_CVPR,Stojanov_2021_CVPR,Huang_2021_ICCV,Liu_2021_ICCV,Yao2020Automated,Tian2020RethinkingFI,Choe_2017_ICCV}. Among numerous promising works along the direction, a limited amount target on generative models. 
Training of generative models especially Generative Adversarial Networks (GANs) \cite{gan} can easily diverge in low-data setting. To overcome the issue, people come up with methods focusing on different aspects in GAN training, such as data augmentation \cite{Karras2020ada,zhao2020diffaugment}, network architecture design \cite{liu2021,Karras2019stylegan2} and applying regularization \cite{Zhang2020Consistency,lecamgan}.
Data augmentation can substantially increase the size of usable samples and enable stable training \cite{zhao2020diffaugment}.

Unlike above data augmentation approaches for generative models which target on image domain, we propose a simple yet effective method to implicitly augment training data without supervision. 
To our knowledge, it is the first attempt to interpolate the multidimensional output feature of the discriminator for data generation.  
This can possibly be due to the fact that applications using GAN frameworks usually adopt objectives with $1$-dimensional discriminator output, such as vanilla GAN \cite{gan} and Wasserstein GAN \cite{wgan}. 
Recently, 
Dai and Hang \cite{Dai_and_Hang_2021_ICCV} 
introduce a metric learning perspective to the
GAN discriminator with multidimensional output and reveal an interesting \textit{flattening effect}: along the training process, the learned metric gradually becomes more uniform and flat. 
The observation 
inspire us to explore the possibility of implementing augmentation in feature space in an unsupervised fashion. 

In this paper, we propose a way to implicitly augment training data by taking the advantage of the flattening effect of discriminators with multiple output neurons. 
An intuitive understanding 
is that, compared to highly sparse and nonlinear nature of real data manifold~\cite{r2021learning}, the low-dimensional feature space is relatively dense and flat. Hence applying interpolation in feature space yields a feasible way for augmentation with higher fidelity. An example of the effect of feature interpolation during training with StyleGAN2 architectures is shown in Figure~\ref{fig:interp_vs_no}. Both sessions utilized adaptive image augmentation \cite{Karras2020ada}, while feature interpolation significantly stabilized training. 
\vspace{-1em}
\begin{figure}[!ht]
    \centering
    \includegraphics[width=0.7\linewidth]{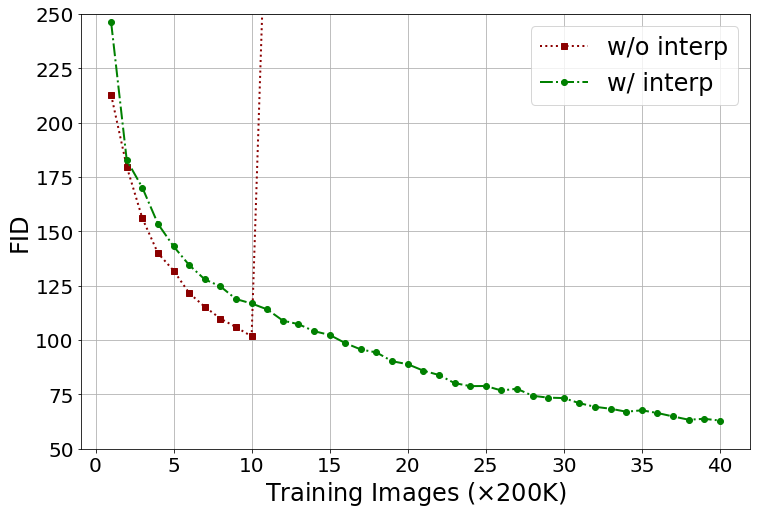}
    \caption{FID ($\downarrow$) during training on {\it Shells} dataset. Red: without feature interpolation; Green: with feature interpolation. Here ``training images'' refers to  training iteration × batch size.}
    \label{fig:interp_vs_no}
    \vspace{-1em}
\end{figure}
The novelty of this work is summarized as follows: 1. We propose an implicit data augmentation method in multidimensional feature space of discriminator output. To our knowledge, this is the first attempt in unsupervised image generation. 
2. We develop a data-driven approach for augmentation, with criteria based on the underlying structure of feature space during training.
3. Results from few-shot image generation experiments show significant improvements on several benchmark datasets. 


\section{Related Work}

\subsection{Low-Shot Data Generation}
Recent works contribute to low-shot data generation from different perspectives.
Karras {\it et al.} \cite{Karras2020ada} implemented augmentation on images with adaptive probabilities by using a validation set. 
Zhao {\it et al.} \cite{zhao2020diffaugment} proposed DiffAugment which applies random differentiable augmentation on both real and generated images.
The above methods significantly improve the amount of available training data in image domain thus remarkably prevent discriminator from overfitting. 
Tseng {\it et al.} \cite{lecamgan} designed a regularization loss term on predictions of discriminator by tracking the moving averages of discriminator predictions during training.
Zhang {\it et al.} \cite{Zhang2020Consistency} proposed consistency regularization for GANs, with the argument on the invariance of samples after transformation.
Liu {\it et al.} \cite{liu2021} proposed an SLE module and an encoder-decoder reconstruction regularization on discriminator to improve training stability in low-data training settings. 
For other directions, one can refer to \cite{ojha2021few-shot-gan,hong2020f2gan,hong2020deltagan,li2020few}.
Different from above techniques, our proposed method is implemented in the multi-dimensional feature space of discriminator output. 
It does not conflict with data augmentation techniques in the image domain and is independent of network architectures.

\subsection{Geometric Interpretations in GANs}
The key idea of this paper comes from the interesting flattening effect of discriminator observed in~\cite{Dai_and_Hang_2021_ICCV}. Particularly, in their paper~\cite{Dai_and_Hang_2021_ICCV}, Dai and Hang interpret the discriminator as a metric generator which learns some intrinsic metric of real data manifold such that the manifold is flat under the learned metric. 
In this paper, we observe the similar behaviors in the {\it geometric} GAN~\cite{lim2017geometric} framework with hinge loss.
Specifically, {\it geometric} GAN use SVM separating hyper-plane to maximizes the margin between real/fake feature vectors and use hinge loss for discriminator which is simple and fast.
Similar effects are also mentioned in \cite{Shao_2018_CVPR_Workshops} by Shao {\it et al.} which shows manifolds learned by deep generative models are close to zero curvature.

\subsection{Interpolation in Feature Space and Mixup}
Combining features in the embedding space are shown to be helpful in image retrieval~\cite{chum2007total,chum2011total,turcot2009better,arandjelovic2012three}. 
Recently, Ko {\it et al.} \cite{ko2020embedding} proposed embedding expansion which utilized a combination of embeddings and performs hard negative mining to learn informative feature representations.
DeVries and Taylor \cite{devries2017dataset} claimed that simple transformations to feature space results in plausible synthetic data due to manifold unfolding in feature space. 
Verma {\it et al.} \cite{verma2019manifold} introduced Manifold Mixup, which implemented interpolation between hidden feature vectors to obtain smoother decision
boundaries at multiple levels of representation. Furthermore, in \cite{verma2019manifold} authors also indicated the flattening effect of Manifold Mixup in learning representations.

Another branch of data augmentation techniques takes advantage of the prior knowledge in learning tasks to interpolate both training images and the corresponding labels. 
Zhang {\it et al.} \cite{zhang2018mixup} suggested that linear interpolations of data samples should lead to linear interpolations of their associated labels.
Kim {\it et al.} \cite{kim2021comixup} applied batch mixup and formulated the optimal construction of a batch of mixup data.
Other works along the track also show improvements on various discriminative learning tasks \cite{kimICML20,verma2019manifold,Yun_2019_ICCV,NEURIPS2019_36ad8b5f}.
Despite the promising progress, applying these methods require augmentation in training data's associated labels, which does not suit the use case in this paper. 
Different from above augmentation methods which are mostly used in supervised discriminative tasks, in this work we develop an implicit augmentation method using data-driven feature interpolation, which is suitable for generative tasks in a unsupervised fashion.

\section{Methodology}
In this section we introduce the main idea of the paper and the proposed implicit augmentation algorithm for low-shot generation in detail.
\begin{figure}
    \centering
    \includegraphics[width=0.8\linewidth]{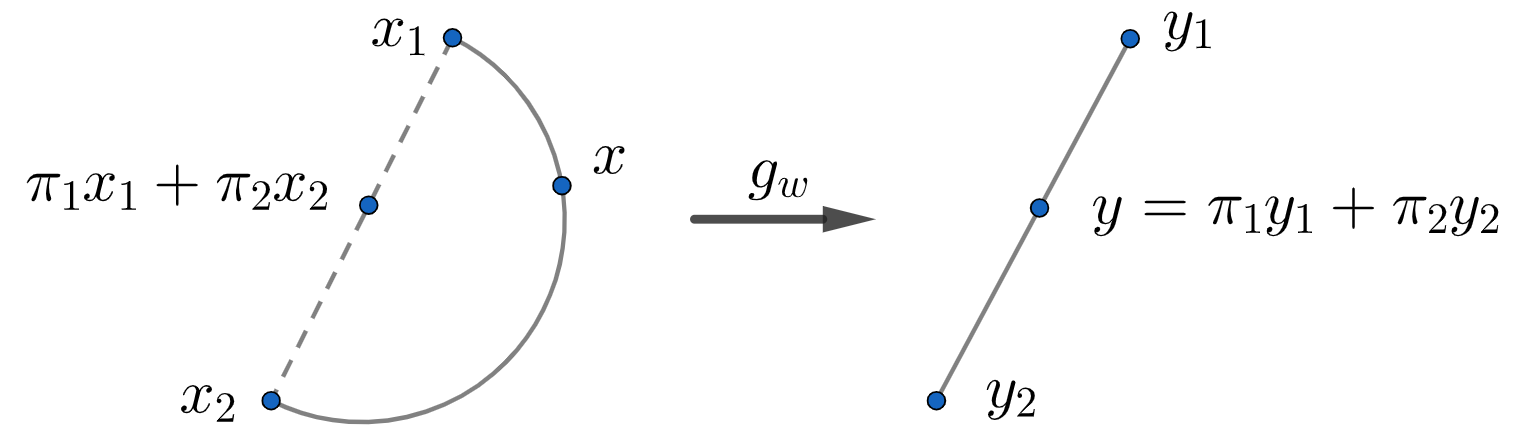}
    \caption{Direct interpolation of real data likely returns points far away from real data manifold; With flattening effect, we propose interpolation in feature space which returns feature $y$ as an approximation of using some imaginary ``real'' sample $x$.}
    \label{fig:interp}
    \vspace{-2em}
\end{figure}

\subsection{The Flattening Effect of Discriminator}
In this paper, we denote $y$ as a {\it valid} feature vector iff. $y=g_w(x)$ for some data $x$ from real data manifold given some deep metric learning network $g_w$. 
A simple illustration is shown in Figure~\ref{fig:interp}.
One question we are interested in is: 
How far away is the interpolation of a group of valid feature vectors from some individual valid feature vector?
In the following we will address this question in a GAN framework using observations from experimental results. 

We adopt the default training setting in \cite{liu2021} and {\it Shells} dataset which contains 64 diverse images for experimentation.
\cite{liu2021} uses hinge loss as GAN objective, thus the discriminator has metric learning effect and is equipped with multi-dimensional output. The hinge objective can be formulated as:

\begin{align}
    L_G = &\,\, \mathbb{E}_{z\sim \mathcal{N}}[-D(G(z))] \label{eqn:lg}\\
    L_D = &\,\, \mathbb{E}_{x\sim P_{real}}[\max(0,1-D(x))] \nonumber\\
          & + \mathbb{E}_{x\sim P_{fake}}[\max(0,1+D(x))]  \label{eqn:ld} \ ,
\end{align}
where $D(x)$ is also named $g_w(x)$ in this paper. 

We utilize the following way introduced in \cite{Dai_and_Hang_2021_ICCV,Shao_2018_CVPR_Workshops} to detect the change of the learned metric along the training process: 
(i) For each iteration $i$, sample some real data points to form a finite metric space $X_i$;
(ii) Construct the normalized distance matrix of $X_i$ under the learned metric; 
(iii) Apply multidimensional scaling (MDS) to the normalized distance matrix to obtain the decreasing (finite) sequence of eigenvalues $C_i=\{\lambda^{(i)}_1,\lambda^{(i)}_2,\cdots,\lambda^{(i)}_b\}$, where $b$ is the number of sample points.
\begin{figure}[!ht]
    \centering
    \includegraphics[width=0.7\linewidth]{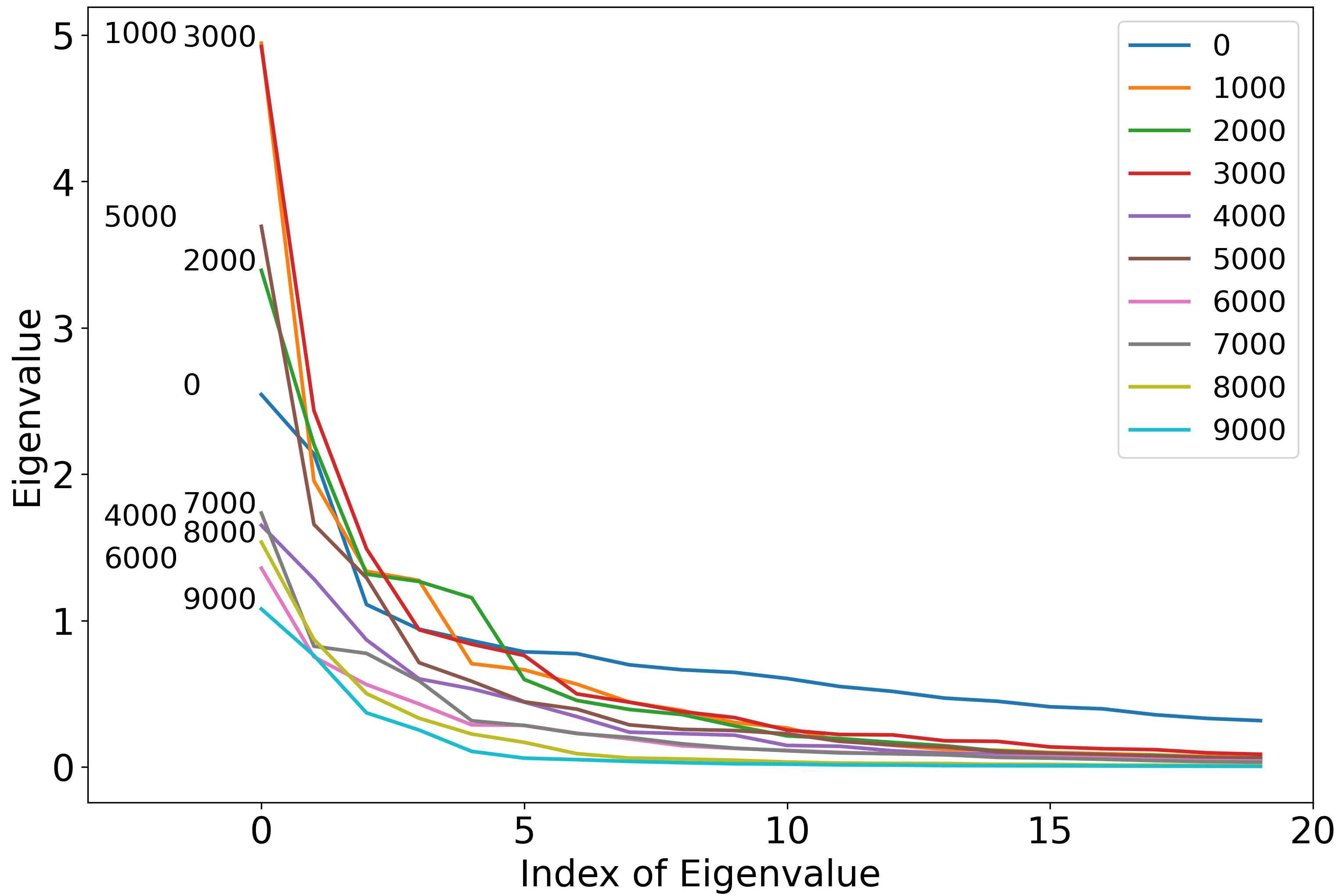}
    \caption{Each curve represents the first 20 eigenvalues obtained using multidimensional scaling (MDS) of the $64\times 64$ normalized distance matrix from $64$ real images under learned metric during training on {\it Shells} dataset. We draw curves for iterations $0,1000,\cdots,9000$.  }
    \label{fig:eigen_20}
    \vspace{-1em}
\end{figure}

Now we observe the eigenvalues to see how the Euclidean distance among feature vectors evolve during the training process. As shown in Figure~\ref{fig:eigen_20}, the curve of eigenvalues becomes closer and closer to $x$-axis with training going forward. At the iteration $9000$, only the first few eigenvalues are non-trivial which implies that the valid feature vectors are compressed on a low-dimensional hyperplane. Compared to the input data dimension $m=1024 \times 1024 \times 3$, 
the valid feature subspace is significantly flat and uniform. This experimental result is consistent with results in \cite{Dai_and_Hang_2021_ICCV}, even though the training settings being used are very different. 
An example of how to infer the shape of data set using eigenvalue curve is shown in Figure~\ref{fig:flat_example}.


The above observation suggests interesting facts to the question at the beginning of this section: 
If a set of valid feature vectors $y_1,\cdots,y_k$ are close to each other, then any interpolation point $y=\sum_{i=1}^k\pi_i y_i$ with $\sum\pi_i=1$ and $0\leq\pi_i\leq 1$ is \textit{very likely} a valid feature vector. 
\begin{figure}[!ht]
    \centering
    \begin{tabular}{c c}
       \includegraphics[width=0.48\linewidth]{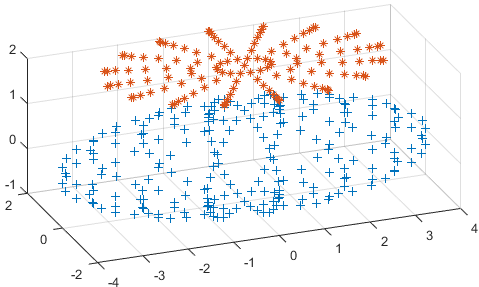}  & \includegraphics[width=0.32\linewidth]{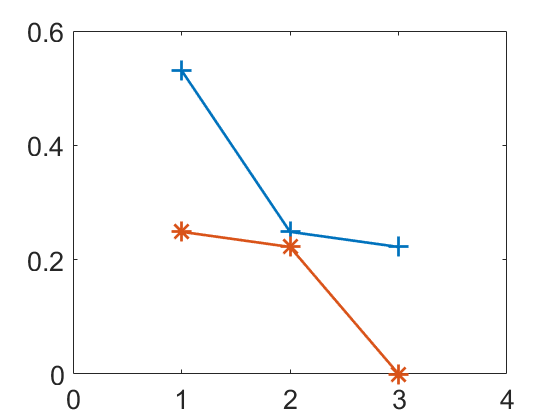}  \\
        (a) & (b)
    \end{tabular}
    \caption{(a) Blue points are located on an ellipsoid and red points are located on a flat disc; (b) Each curve represents the eigenvalue curve of corresponding point set with the same color. }
    \label{fig:flat_example}
    \vspace{-2em}
\end{figure}
Next, we explore how this flattening effect helps with data augmentation.

\subsection{Implicit Data Augmentation}
Given some neural network $g_w$, loss function $L$ and training samples $x_i$, one direct way of augmenting data is to generate synthetic data sample $x$ and use it as training data. All the efforts a synthetic data $x$ could make end up with the calculation of the gradients:
$$\frac{\partial L(g_w(x))}{\partial w}\ .$$

Given valid feature vectors $y_1,y_2,\cdots,y_k$ which are extracted from some training samples $x_1,x_2,\cdots,x_k$. 
For any interpolation $y=\sum_{i=1}^k \pi_i y_i$ with  $0\leq\pi_i\leq 1$ and $\sum_{i=1}^k\pi_i=1$,
based on the flattening effect, \textit{very likely} there exists a virtual real data point $x$ such that $y=g_w(x)$. Even though it is not obvious to construct $x$ explicitly, we are able to estimate its contribution to gradients \textit{implicitly} by taking the average of the contributions of $x_1,\cdots,x_k$:
$$\frac{\partial L(g_w(x))}{\partial w}\approx \sum_{i=1}^k\pi_i\frac{\partial L(g_w(x_i))}{\partial w}\ ,$$
when $y_1,\cdots,y_k$ are close enough.

The above assertion is summarised in the following:

\begin{lemma}
Given some neural network $g_w:\mathbb{R}^m\rightarrow\mathbb{R}^n$ and some differentiable loss function $L:\mathbb{R}^n\rightarrow\mathbb{R}$. Fix $y=g_w(x)$. Then for a set of nearby points $y_i=g_w(x_i),$ $i=1,\cdots,k$ such that $y=\sum_{i=1}^k \pi_i y_i$, $\sum_{i=1}^k \pi_i=1$, $0\leq\pi_i\leq 1$, we have:
\begin{align*}
\bigg| \frac{\partial L(g_w(x))}{\partial w} - \sum_{i=1}^k\pi_i\frac{\partial L(g_w(x_i))}{\partial w} \bigg| =  O(\max_i\|y-y_i\|)\ .\\
\end{align*}
\label{lem:backpropagation}
\vspace{-3em}
\end{lemma}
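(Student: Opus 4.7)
The plan is to apply a first-order Taylor expansion around the interpolated feature $y$, exploiting the algebraic identities $\sum_i \pi_i = 1$ and $\sum_i \pi_i y_i = y$ to cancel the zeroth- and first-order terms. By the chain rule, $\partial L(g_w(x))/\partial w = \nabla_y L(y)^T \cdot \partial g_w(x)/\partial w$, so the quantity of interest is a composition whose dependence on $w$ enters through the output feature $y$. I would introduce the auxiliary map $\Phi(y) := \partial L(g_w(x))/\partial w$, regarded (at least locally) as a function of $y = g_w(x)$; this is the object on which the Taylor argument runs.

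Next, expanding $\Phi$ around $y$ at each nearby point $y_i$ gives
\begin{equation*}
\Phi(y_i) = \Phi(y) + D\Phi(y)(y_i - y) + R_i,
\end{equation*}
where $R_i$ is a higher-order remainder. Taking the convex combination against $\pi_i$ and using $\sum_i \pi_i = 1$ together with $\sum_i \pi_i y_i = y$, the constant term reproduces $\Phi(y)$ exactly, while the first-order term collapses since
\begin{equation*}
\sum_i \pi_i D\Phi(y)(y_i - y) = D\Phi(y)\Bigl(\sum_i \pi_i y_i - y\Bigr) = 0.
\end{equation*}
Only the remainders survive, giving $\bigl|\Phi(y) - \sum_i \pi_i \Phi(y_i)\bigr| \leq \sum_i \pi_i |R_i|$, which is $O(\max_i \|y - y_i\|)$ under Lipschitzness of $\Phi$ (and in fact $O(\max_i \|y - y_i\|^2)$ if $\Phi$ is $C^2$).

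The main obstacle is justifying that $\Phi$ is well defined, or at least smoothly extendable, as a function of $y$ alone: the backward Jacobian $\partial g_w(x)/\partial w$ a priori depends on the input $x$ rather than on its image $y$. The flattening effect of the preceding subsection is what makes this plausible, since the relevant portion of the feature manifold behaves as if low-dimensional and one can reasonably select a smooth local inverse (or Lipschitz surrogate) for $g_w$ near $y$. Once such a $\Phi$ is in hand with a uniform bound on $\|D\Phi\|$ over a neighborhood containing $y$ and all the $y_i$, the two cancellation identities deliver precisely the stated $O(\max_i \|y - y_i\|)$ rate; the remaining work is routine bookkeeping.
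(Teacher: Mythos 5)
Your route is genuinely different from the paper's, but it contains a gap that the paper's proof avoids. The paper never views the gradient contribution as a function of the feature $y$. It applies the chain rule, silently replaces $\partial g_w(x)/\partial w$ by $\sum_i \pi_i\,\partial g_w(x_i)/\partial w$ (a natural move since $x$ is only a virtual point defined by $g_w(x)=y$), and lands on the single sum $\sum_{i,j}\pi_i\,\frac{\partial g^{(j)}_w(x_i)}{\partial w}\bigl(\partial_j L(y)-\partial_j L(y_i)\bigr)$. The only object that gets expanded to first order is $\nabla L$, which legitimately is a function of $y$; Lipschitzness of $\nabla L$ then yields $O(\max_i\|y-y_i\|)$, using only $\sum_i\pi_i=1$. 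The barycenter condition $\sum_i\pi_i y_i=y$ is never actually exploited in the paper's bound.

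Your argument instead introduces $\Phi(y) := \partial L(g_w(x))/\partial w$ and cancels both the zeroth- and first-order Taylor terms, which is elegant and would even give a quadratic rate under $C^2$ regularity. But it hinges on $\Phi$ being a well-defined, smooth function of $y$ alone, and that is the gap: the backward Jacobian $\partial g_w(x)/\partial w$ depends on $x$, and $g_w:\mathbb{R}^m\to\mathbb{R}^n$ with $m\gg n$ has no local inverse, so there is no canonical way to read this Jacobian off from $y$. The flattening effect does not rescue this; it says that the image of the real data under $g_w$ is approximately flat and low-dimensional, not that $g_w$ is locally injective on the data manifold (and a discriminator certainly maps many distinct inputs to nearly identical features). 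To close the gap you would either have to adopt the paper's implicit move --- take $\sum_i\pi_i\,\partial g_w(x_i)/\partial w$ as the meaning of $\partial g_w(x)/\partial w$ for the virtual $x$, at which point you reproduce the paper's computation --- or separately bound the variation of $\partial g_w(\cdot)/\partial w$ across $x_1,\dots,x_k$, which the Taylor-on-$\Phi$ framing obscures rather than handles.
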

\begin{proof}
In the following, we use $\partial_j$ to represent the partial derivative of the $j$-th coordinate.
\begin{align*}
 & \frac{\partial L(g_w(x))}{\partial w} - \sum_{i=1}^k\pi_i\frac{\partial L(g_w(x_i))}{\partial w}\\
= & \sum_{j=1}^n\partial_j L(y)\frac{\partial g^{(j)}_w(x)}{\partial w}-\sum_{i=1}^k\pi_i \sum_{j=1}^n\partial_j L(y_i)\frac{\partial g^{(j)}_w(x_i)}{\partial w}\\
= & \sum_{i,j}\partial_j L(y)\pi_i\frac{\partial g^{(j)}_w(x_i)}{\partial w}-\sum_{i,j}\pi_i \partial_j L(y_i)\frac{\partial g^{(j)}_w(x_i)}{\partial w}\\
= & \sum_{i,j}\pi_i\frac{\partial g^{(j)}_w(x_i)}{\partial w}\left(\partial_j L(y)-\partial_j L(y_i)\right)\\
= & \frac{\partial g_w(x)}{\partial w} O(\max_i\|y-y_i\|)=O(\max_i\|y-y_i\|)\\
\end{align*}
\end{proof}

In summary, one can update the network parameters by using the average of the gradients raised by some set of training samples when performing gradient descent, if their embedded features are close to each other. In the following sections, we introduce the proposed data-driven augmentation algorithm in detail.

\subsection{Nearest Neighbors Interpolation}

Denote a data point (image) $x_i$, its feature vector $y_i = g_w(x_i)$, and the $k$ nearest neighbours (including itself) as $y_{ij}, j = 1,\cdots, k$. We define an interpolated feature for $y_i$ using its $k$ nearest neighbours as 
\begin{equation}\label{eq:interp}
\tilde{y_i} = \sum_{j=1}^{k} \pi_{ij} y_{ij}  
\end{equation}
where $\sum_{j=1}^{k} \pi_{ij} = 1$ and $0 \leq \pi_{ij} \leq 1$. 

For each $y_i$, $\pi_{ij}$ in Eqn~(\ref{eq:interp}) follows Dirichlet distribution: 
\begin{equation}
\pi_{ij} \sim \text{Dir}(\alpha_{ij}),\  i = 1, 2, \cdots, k
\label{eqn:dirichlet}
\end{equation}
One can decide the concentration parameters $\alpha_{ij}$ to control the weights of the nearest neighbors. 
For example, when $\alpha_{ij} = 1$ for all $j$s, the weights are uniformly distributed. 
Here we leverage distances between features and geometry of manifold to inform the parameters. The detailed procedure is described as follows.

For the $i$-th feature $y_i$ and its nearest neighbor $y_{ij},\ j = 1,2,\cdots,k$:
\begin{equation}
\alpha_{ij} = T(M(y_i,y_{ij}))^t \ ,
\label{eqn:alpha}
\end{equation}
where $T(x): \mathbb{R}^* \rightarrow \mathbb{R}^+$ is a monotonically decreasing function. ($\mathbb{R}^*=\{x \in \mathbb{R}, x >= 0\}, \mathbb{R}^+=\{x \in \mathbb{R}, x >0\}$). $M(y_i,y_{ij})$ is the distance between $y_i$ and $y_{ij}$ and $t > 0$ is used to control the skewness of the interpolation. 
There are lots of choices for $T(x)$,  for example, $T(x) = \frac{1}{1+x}$. 
The intuition is to have larger weights for closer neighbors, as shown in Figure~\ref{fig:interp_dirichlet}.
In terms of $t$, a smaller $t$ gives more uniform/smooth interpolation while larger $t$ prefers more weights on nearer neighbors. 
For simplicity we set $t=1$ as the default choice.
\begin{figure}[!ht]
    \centering
    \includegraphics[width=0.8\linewidth]{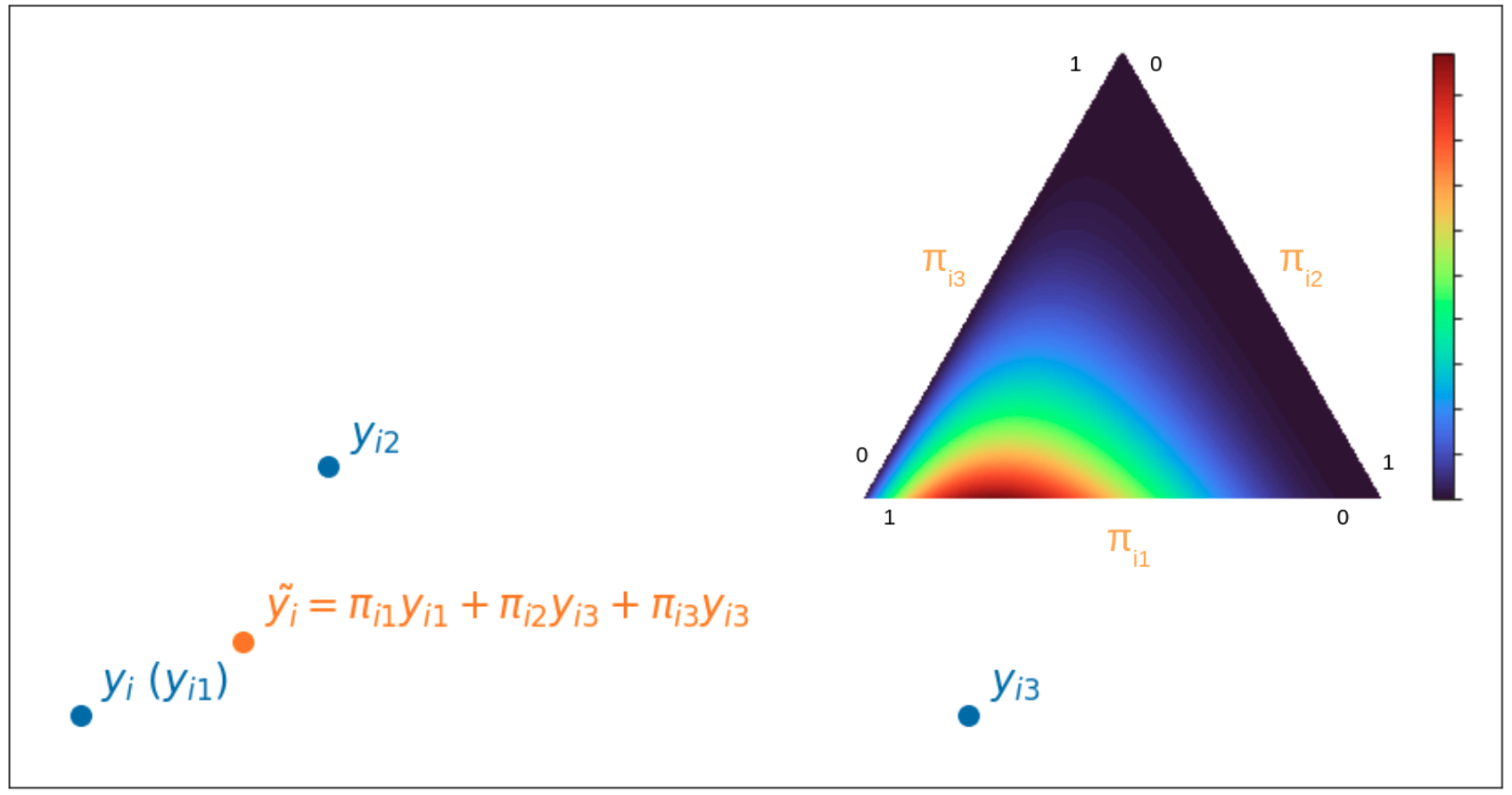}
    \caption{Illustration of using Dirichlet distribution to interpolate features. $y_{ij}, j = 1,2,3$ are the first 3 nearest neighbors of $y_{i}$. $\tilde{y_i} = \pi_{i1}y_{i1}+\pi_{i3}y_{i2}+\pi_{i3}y_{i3}$ is the interpolation where $\pi_{ij}$ are sampled from Dirichlet distribution with concentration parameter $\alpha_{ij}$ in proportion to the distance $M(y_{i},y_{ij})$. }
    \label{fig:interp_dirichlet}
    \vspace{-2em}
\end{figure}

\subsection{Data-Driven Adaptive Augmentation} 
To facilitate usage of feature interpolation, 
we consider taking advantage of the flattening effect during training to decide the aggressiveness of augmentation. 
Under the context, the aggressiveness can mainly be interpreted by (1) choices of nearest neighbour interpolation, (2) shape of Dirichlet distribution and (3) proportion of augmented features to use.
To address (1), when the embedding space is more flat, one may use a larger $k$ for sampling interpolated points. On the contrary, a small $k$ leads to the interpolated features only cover a small portion of the feature space, thus may result in limited augmentation effect and bias in recovering real data manifold. 
One important question is how to reflect the degree of ``flatness'' of data manifold. 
As mentioned earlier, such information can be reached by the (multidimensional scaling) MDS of pairwise distances between features \cite{Shao_2018_CVPR_Workshops,Dai_and_Hang_2021_ICCV}. 
The flatness can be reflected by the number of large eigenvalues of MDS, where fewer number of large eigenvalues indicate approximately smaller dimensions of the space. 
Empirically we count number of eigenvalues $\{\lambda_i\}$ bigger than $10\%$ of the largest eigenvalue $\lambda_{max}$ in a batch $b$ as the effective dimensions, and use $k =  I(\lambda_i < 0.1 \lambda_{max})$ as number of near neighbours. 

The shape of Dirichlet distribution is controlled by $M$ obtained from data itself as discussed in Section 3.3.
We also involve augmentation probability $p$ which decides the proportion of interpolated points used for training. $p=0$ refers to no augmentation, and $p=1$ refers to when all features are from interpolation. Similarly, we let $p=(k-1) / b$ which introduces more aggressive augmentation with fewer effective dimensions. In practice one can also find other ways to define $p$, or use fixed $p$s for simplicity. In experiments we observe using a reasonable choice of $p$ (such as $p=0.6$) is sufficient for stabilizing training. We will discuss the behaviors of these parameters later in Section 4.2.
The whole Adaptive Feature Interpolation (AFI) algorithm is summarized in Algorithm 1.
\begin{algorithm}~\label{algo:1}
	\caption{Adaptive Feature Interpolation.}
	\hspace*{\algorithmicindent} \textbf{Input:}  A batch of features $\{y_i\}$ extracted from real data;\\
	\hspace*{\algorithmicindent} \textbf{Output:} Augmented batch of features $\{y_i^*\}$;\\
	\begin{algorithmic}[1]
        \STATE Calculate distance matrix $M$ from $\{y_i\}$;
        \STATE Solve for MDS of $M$ and return its eigenvalues $\{\lambda_i\}$;
        \STATE Calculate $\{\alpha\}, k, p$ from $M$ and $ \{\lambda_i\}$; 
        \STATE For each $i$, sample interpolated features $\tilde{y_i}$ using Eqn~(\ref{eqn:dirichlet}) with its $k$ near neighbours;
		\STATE For each $i$, set $y_i^*=\tilde{y_i}$ using Eqn~(\ref{eq:interp}) with probability $p$ else $y_i^*=y_i$.
	\end{algorithmic}
\end{algorithm}

\section{Experiments}
In this section, we explore the behavior of the proposed method and provide evaluation results on multiple datasets.

\subsection{Datasets and Implementation Details}
We conducted unconditional generation experiments on several benchmark datasets or their subsets, including Shells, Art, Anime Face, Pokemon provided by \cite{liu2021}, Cat, Dog \cite{Si2012LearningHI}, Obama, Grumpy cat \cite{zhao2020diffaugment} 
 and CIFAR-10 \cite{cifar}. 
We utilized various metrics for evaluations, including Frechet Inception Distance (FID) \cite{NIPS2017_8a1d6947}, Kernel Inception Distance (KID) \cite{kid}
and Precision and Recall (PR) \cite{precision_recall_distributions}. By default we generated 50K samples against real data for evaluation.
Lower FID, KID scores and higher PR indicate better results. 

We adopted StyleGAN2 \cite{Karras2020ada}  and FastGAN \cite{liu2021} network architectures, and used consistent parameter settings provided in their papers for experimentation. 
To facilitate experiments with multidimensional discriminator output, the number of output neurons $n$ of discriminator in \cite{Karras2020ada} was set to $20$, and the output logits of discriminator in \cite{liu2021} were reshaped to fit multidimensional setting.
Feature interpolation is performed on features extracted from both real and fake images, which is empirically shown to be beneficial in experiments.
Experiments were conducted using PyTorch framework on Tesla V100 GPUs. 

\subsection{Ablation Study}
In this section we study the behaviors of proposed feature interpolation (FI) along with experiments using direct image interpolation (II) for comparison. 

We first study the effect of $p$ in simple cases using StyleGAN2 architectures \cite{Karras2020ada,Karras2019stylegan2}.
In each setting we recorded FID during training with $p=0.3,0.6,0.9$ as shown in Figure~\ref{fig:comp_vs_iter}. Here in II sessions we performed direct interpolation on images in a {\it mixup} style \cite{zhang2018mixup}. For FI sessions we did not utilize any image augmentation techniques.
One can see that compared with FI experiments, the II sessions diverged earlier in all cases. The II session with $p=0.9$ has less positive effect compared to when $p=0.6$, which indicates image interpolation does not favor large $p$s in this case.
In contrast, the FI experiments had more stable training sessions even with $p=0.9$. 
Best results were obtained with dynamic $p$ in this case.
These results suggest that FI may better enjoy the ``non-leaking'' property \cite{Karras2020ada} in training. 
In experiments on different datasets we find the use of small amount of original valid features is a necessary regularization for stable training.
Note that for II it is not obvious to apply dynamic $k$s. One reason is that using Euclidean distances between pixels to find near neighbors seems worthless. In addition, with fixed real images one cannot implement dynamic augmentation based on the flattening effect.
\begin{figure}[!ht]
    \centering
    \includegraphics[width=0.8\linewidth]{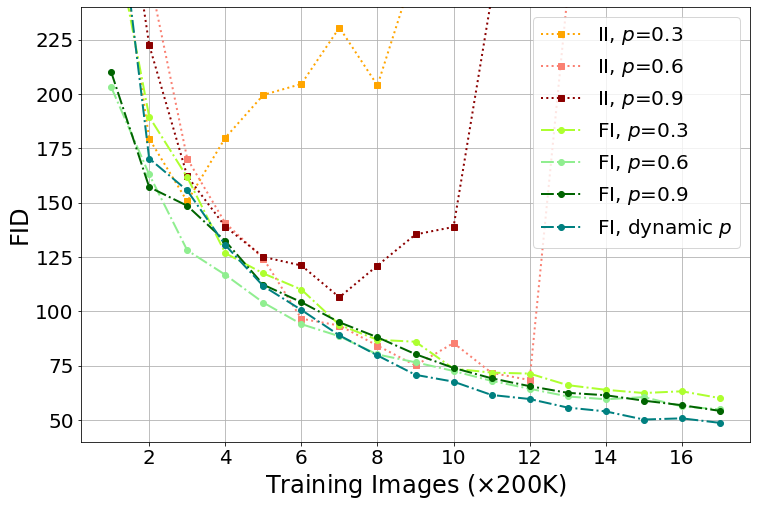}
    \caption{FID evaluation during training on {\it Shells} dataset  using image interpolation (II) and feature interpolation (FI) with StyleGAN2 architectures.}
    \label{fig:comp_vs_iter}
\end{figure}
    
Next we experimented on settings both uniform ($t=0$) and skewed ($t=1$) distributions to study the behavior of Dirichlet distributions on feature interpolation.
In each experiment we used batch size $8$, and trained 50K iterations with fixed $p=0.9$ using FastGAN architectures.
In this case we employed DiffAugment for image augmentation and used 1K generated images for fast evaluation.
In each setting we report the best FID during training along with its corresponding iteration in Table~\ref{tab:shells_p09}. 
\begin{table}[!ht]
  \small
  \centering
  \begin{tabular}{l|l|cccccccc}
    \toprule
    & $k$ & 1 & 2 & 3 & 4 & 5 & 6 & 7 & 8\\
    \midrule
    \multirow{2}*{\makecell{Uniform\\(t=0)}}
    &FID &165.72 &140.89 &148.95 &141.82 &145.58 &138.54 &136.13 &144.90  \\
    &Iter(K) & 10 &30 &20 &40 &35 &45 &35 &20 \\
    \midrule
    \multirow{2}*{\makecell{Skewed\\(t=1)}}
    &FID &165.72 &130.98 &136.33 &141.51 &131.54 &137.62 &139.90 & 135.85\\
    &Iter(K) &10  &35 &50 &40 &50 &35 &35 &45 \\
    \bottomrule
  \end{tabular}
  \caption{FID evaluation of generated samples on {\it Shells} dataset using FastGAN architectures with fixed $k$s and $p=0.9$.}
  \label{tab:shells_p09}
  \vspace{-2em}
\end{table}
 Overall, we notice that FIDs with skewed distributions are in general better than ones with uniform distributions. 
 Intuitively, with skewed distribution the interpolated features are likely closer to original features, thus may lead to smaller bias in training. 
 
One interesting question is the relation between size of dataset $N$ and value of $k$ in training. We recorded $k$s up to 500K training images in CIFAR-10 experiments and 200K training images in {\it Art} experiments with effective batch size $8$ on each GPU, and computed averaged $k$ against different $N$s. 
 \begin{table}[!ht]
	\centering
	\begin{tabular}{lcccc|cc}
	    \hline
         &  \multicolumn{4}{c}{CIFAR-10} & \multicolumn{2}{c}{Art}\\
		\hline
		Images & 100 & 500 & 5000 & 50000 & 100 & 1000  \\
		\hline
		$p=0$ &4.39 &4.02 &2.34 &2.14 &3.41 &2.43 \\
		AFI  &4.62 &3.86 &2.18 &2.26 &3.53 &2.59\\
		\hline
	\end{tabular}
	\caption{Averaged $k$ during training under different amount of training data $N$ with batch size 8 on each GPU.}
	\label{tab:k_vs_N}
  \vspace{-2em}
\end{table}
Table~\ref{tab:k_vs_N} also shows that under each setting, overall averaged $k$ decreases with $N$ increasing, which corresponds to less augmentation with more real training data.
This dynamic mechanism reduces the risk of introducing more biased augmentation with larger $N$ as mentioned in \cite{Karras2020ada}, where 
authors point out an interesting phenomenon that the positive effect of data augmentation decreases with size of real training data increasing.
In addition, results in Table~\ref{tab:k_vs_N} also reveal that more training data enlarges the effective dimension of data manifold. 
Here we also present averaged $k$ during training on CIFAR-10 in Figure~\ref{fig:k_vs_N} which provides some insights on the behaviors of flatness reflected by $k$. We notice that  smaller $N$s suggest fewer effective dimensions of data manifold and stronger augmentation. Especially with small $N$s ($N=100$ and $500$), the use of augmentation becomes more and more aggressive during training, except for the beginning of training sessions. At the beginning stage, the averaged $k$ fluctuates or even decreases before it reaches the point of inflection. The longer period of this training phase for larger $N$s may suggest that the discriminator needs more training to learn meaningful feature representations.
\begin{figure}[!ht]
    \centering
    \includegraphics[width=0.8\linewidth]{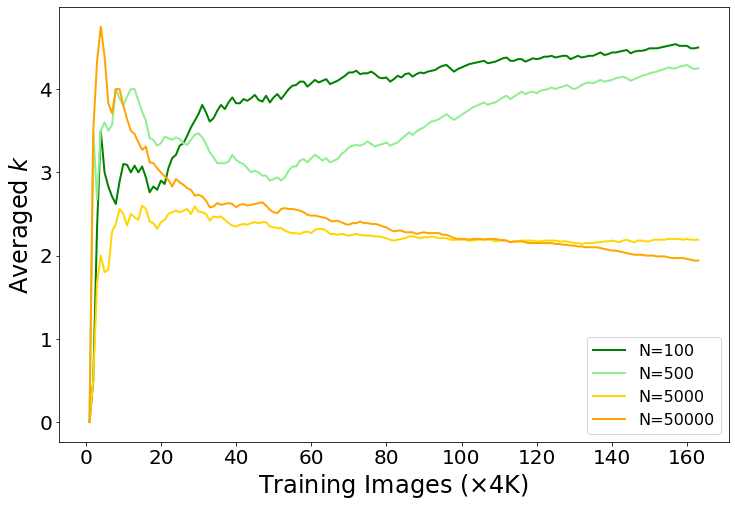}
    \caption{Averaged $k$ during training on CIFAR-10 with different  $N$s using effective batch size 8 on each GPU.}
    \label{fig:k_vs_N}
\end{figure}

To study the effect of feature interpolation on other datasets, we performed experiments using AFI without applying transformation-based image augmentations such as \cite{Karras2020ada,zhao2020diffaugment}. We also present results using interpolation methods similar as input and manifold mixup \cite{zhang2018mixup,verma2019manifold}, except that in the unsupervised task no labels are available for interpolation.
In each setting we augmented both real and fake images (or features) with the corresponding method only. 
Interpolations using \cite{zhang2018mixup,verma2019manifold} were implemented on images and features respectively.
Table~\ref{tab:comp_interp_only} shows AFI significantly improves baseline results  on all datasets. In contrast, directly applying feature interpolation in a {\it mixup} style led to the worst result. 
\begin{table}[!ht]
	\centering
	\begin{tabular}{lcccccccc}
	    \hline
	       & Shells & Anime & Art & Pokemon  & Cat & Dog & Grumpy cat & Obama \\
		\hline
	    StyleGAN2  &229.83   &197.01 &145.08 &194.93  &78.32   & 225.63 &43.88 &104.64 
	    \\
	    \hline
        + Input &135.66 &91.99 &86.51 &174.91 &60.35 &118.79 &26.98 &36.65\\	  
        + Feature  &319.80 &308.16 &117.35 &376.83 &236.99 &270.66 &118.80 &160.57 \\
	    + AFI & 48.71   & 85.97 & 81.08  & 131.86  & 57.47  & 143.60 & 37.46 & 31.88
	    \\
		\hline
	\end{tabular}
	\caption{FID evaluation of experiments without transformation-based augmentations.}
	\label{tab:comp_interp_only}
\end{table}

\subsection{Results}
In the following we provide final evaluations results on various datasets. 
\begin{table}[!ht]
	\centering
	\begin{tabular}{lcccccccc}
	    \hline
	    Dataset & Shells &  Anime & Art  & Pokemon & Dog & Cat & Grumpy cat & Obama  \\
	    \hline
	    Image size & 1024 & 1024 & 1024 & 1024 & 256 & 256 & 256 & 256 \\
	    \hline
	    Number of images & 64 & 120 & 1000  & 800 &389 &160 &100 &100 \\
	    \hline
	    FastGAN \cite{liu2021} &152.53  &60.04  & 48.44 &57.05  &51.24 &39.30 &27.59 &40.52 \\
	    + AFI &{\bf 124.80}  &{\bf 55.35}  &{\bf 43.09}  &{\bf 50.47}  & {\bf 50.89} &{\bf 35.18} &{\bf 25.02} &{\bf 36.43} \\
		\hline
	    StyleGAN2 \cite{Karras2020ada}  &123.66 &60.51  &72.36   &75.39 &59.07 &39.78 &31.58 &44.04\\
	    Hinge loss, $n=20$ & 101.72 &55.67 & 58.42 & 55.32 &56.43 &40.24 &28.69 &40.18\\
	    + AFI &{\bf 62.99} &{\bf 33.48}  &{\bf 43.94}  &{\bf 44.79} &{\bf 49.14} &{\bf 35.26} &{\bf 22.03} &{\bf 31.99}
	    \\
		\hline
	\end{tabular}
	\caption{FID evaluations on $1024\times1024$ and $256\times256$ experiments using FastGAN and StyleGAN2  architectures.}
	\label{tab:comp_1024}
    \vspace{-2em}
\end{table}

FID evaluation of $1024\times1024$ and $256\times256$ experiments are displayed in Table~\ref{tab:comp_1024}. 
In these experiments we incorporated image augmentation techniques, where \cite{Karras2020ada} applies adaptive augmentation and \cite{liu2021} employs DiffAugment.
Table~\ref{tab:comp_1024} shows that using feature interpolation further improved results from strong baselines. With StyleGAN2 architectures we observe more significant gains across datasets.
Note that simply using $n=20$ without FI, one can already see improvements compared to results from \cite{Karras2020ada}. This behavior is consistent with theoretical analysis and experimental results in \cite{https://doi.org/10.48550/arxiv.2109.03378} that multidimensional discriminator output has its advantage in GAN training.
We further present evaluations of KID and precision-recall with StyleGAN2 architectures in Table~\ref{tab:kid_1024} and Table~\ref{tab:kid_256} for reference.
Examples of randomly generated $1024\times1024$ images are displayed in Figure~\ref{fig:comp_gen}.
As shown in the figure, samples from experiments with FI have consistently better qualities.

We also tested the effect of feature interpolation on CIFAR-10 with small amount of partial data using the default setting in \cite{Karras2020ada} as baseline. Using feature interpolation improves FID from 42.80 to 27.62 with only 0.2\% training data (100 images), and from 19.69 to 13.50 with 1\% training data. 
\begin{table}[!ht]
	\centering
	\small
	\begin{tabular}{lcc|cc|cc|cc}
	    \hline
        & \multicolumn{2}{c}{Shells} & \multicolumn{2}{c}{Anime} & \multicolumn{2}{c}{Art} & \multicolumn{2}{c}{Pokemon}\\
		\hline
		 & KID  & PR & KID  & PR & KID  & PR & KID  & PR \\
		\hline
		\cite{Karras2020ada}   &20  &(0.789,0.085) &15  &(0.966,0.933) &26  &(0.574,0.823) &28  &(0.621,0.727)  \\
		+ AFI  &{\bf 2}  &({\bf 0.852},{\bf 0.132}) &{\bf 4} &({\bf 0.984},{\bf 0.974}) &{\bf 9}  &({\bf 0.887},{\bf 0.965}) &{\bf 12} &({\bf 0.948},{\bf 0.922}) \\
		\hline
	\end{tabular}
	\caption{KID(x${10}^3$)($\downarrow$) and Precision-Recall (PR)($\uparrow$) evaluations on $1024\times1024$ experiments.}
	\label{tab:kid_1024}
    \vspace{-2em}
\end{table}

\begin{table}[!ht]
	\centering
	\small
	\begin{tabular}{lcc|cc|cc|cc}
	    \hline
        & \multicolumn{2}{c}{Dog} & \multicolumn{2}{c}{Cat} & \multicolumn{2}{c}{Grumpy cat} & \multicolumn{2}{c}{Obama}\\
		\hline
		 & KID  & PR & KID  & PR & KID  & PR & KID  & PR \\
		\hline
		\cite{Karras2020ada}   &18  &(0.874,{\bf 0.948}) &6  &({\bf 0.974},{\bf 0.951}) &5  &(0.845,0.794)  &13  &(0.930,0.860)  \\
		+ AFI  &{\bf 14}  &({\bf0.922},0.932) &{\bf 4}  &({\bf 0.976},{\bf 0.950}) &{\bf 3}  &({\bf 0.973},{\bf 0.953}) &{\bf 10} &({\bf 0.979},{\bf 0.970}) \\
		\hline
	\end{tabular}
	\caption{KID(x${10}^3$) and Precision-Recall (PR) evaluations on $256\times256$ experiments.}
	\label{tab:kid_256}
\end{table}
\begin{figure}[!ht]
    \centering
    \includegraphics[width=1\linewidth]{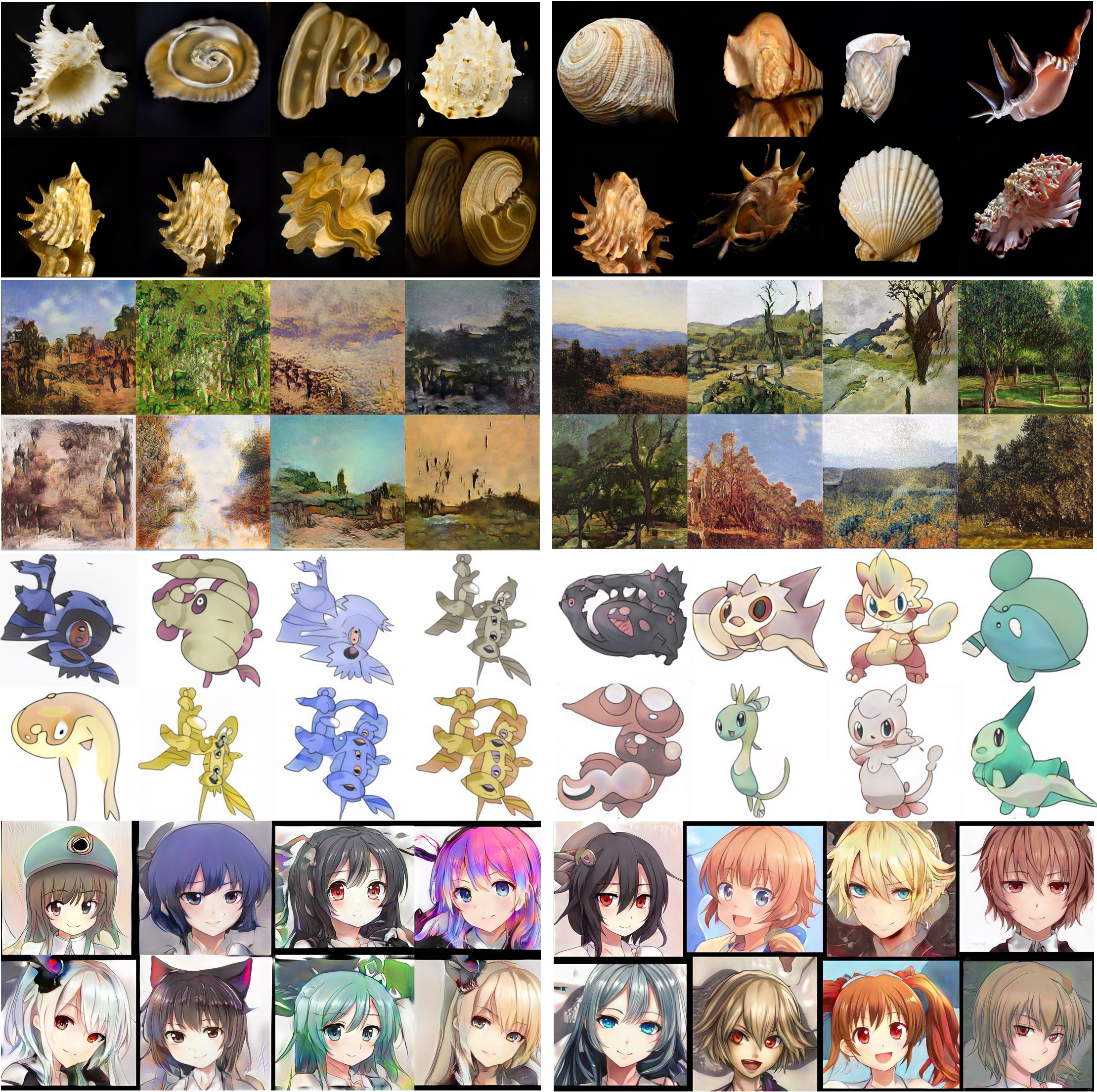}
    \caption{Randomly generated $1024\times1024$ samples on different datasets. From top to bottom: Shells, Art, Pokemon and Anime.  Left: StyleGAN2-ADA \cite{Karras2020ada}; Right: with Adaptive Feature Interpolation.}
    \label{fig:comp_gen}
\end{figure}

\section{Discussion}
In this paper we have proposed an adaptive augmentation approach for low-shot data generation. Instead of producing new training images, the method functions in the multidimensional feature space of discriminator output by utilizing the flattening effect of feature space during training.  
Experiments show the proposed method improves results from strong baselines in low-data regime.



\clearpage
%
%
\bibliographystyle{splncs04}
\bibliography{egbib}

\clearpage

\end{document}